\documentclass[10pt]{article} 
\usepackage[preprint]{tmlr}

\usepackage{amsmath,amsfonts,bm}

\def\eqref#1{equation~\ref{#1}}

\def\1{\bm{1}}

\DeclareMathAlphabet{\mathsfit}{\encodingdefault}{\sfdefault}{m}{sl}
\SetMathAlphabet{\mathsfit}{bold}{\encodingdefault}{\sfdefault}{bx}{n}

\newcommand{\Var}{\mathrm{Var}}

\newcommand{\Cov}{\mathrm{Cov}}

\usepackage{amsmath}
\usepackage{amsfonts}
\usepackage{amssymb}
\usepackage{amsthm}
\usepackage{booktabs}
\usepackage{mathtools}
\usepackage{array}
\usepackage{enumitem}
\usepackage{hyperref}
\hypersetup{colorlinks=true, linkcolor=blue!50!black, urlcolor=blue!50!black, citecolor=blue!50!black}
\usepackage{url}
\usepackage{tikz}

\title{Learning to Control Coupled-Dynamics Environments with Joint Markov Decision Processes}

\author{\name Ege C. Kaya \email kayae@purdue.edu \\
      \addr Elmore Family School of Electrical and Computer Engineering\\
      Purdue University
      \AND
      \name Aliasghar Pourghani \email apourgha@purdue.edu \\
      \addr Elmore Family School of Electrical and Computer Engineering\\
      Purdue University
      \AND
      \name Mahsa Ghasemi \email mahsa@purdue.edu \\
      \addr Elmore Family School of Electrical and Computer Engineering\\
      Purdue University
      \AND
     \name Vijay Gupta \email gupta869@purdue.edu\\
      \addr Elmore Family School of Electrical and Computer Engineering\\
      Purdue University
      \AND
      \name Abolfazl Hashemi \email abolfazl@purdue.edu \\
      \addr Elmore Family School of Electrical and Computer Engineering\\
      Purdue University}

\newtheorem{definition}{Definition}
\newtheorem{condition}{Condition}
\newtheorem{theorem}{Theorem}

\newtheorem{proposition}{Proposition}
\newtheorem{corollary}{Corollary}[theorem]

\newcommand{\cS}{\mathcal S}
\newcommand{\cA}{\mathcal A}
\newcommand{\cJ}{\mathcal J}
\newcommand{\cM}{\mathcal M}
\newcommand{\cP}{\mathcal P}
\newcommand{\cT}{\mathcal T}
\newcommand{\cX}{\mathcal X}
\newcommand{\cH}{\mathcal H}
\newcommand{\cE}{\mathcal E}

\newcommand{\cB}{\mathcal B}
\newcommand{\bE}{\mathbb E}
\newcommand{\bR}{\mathbb R}
\newcommand{\Law}{\operatorname{Law}}

\def\month{MM}  
\def\year{YYYY} 
\def\openreview{\url{https://openreview.net/forum?id=XXXX}} 

\begin{document}

\maketitle

\begin{abstract}
Coupled-dynamics environments expose the one-step outcomes that would follow from several possible counterfactual actions under a common realization of exogenous randomness. The ordinary Markov decision process formalism allows one to reason about the marginal law of each action but discards dependence across these counterfactual outcomes. The Joint Markov decision process (JMDP) formalism preserves that dependence. Prior work established the formalism and solved the fixed-policy joint moment evaluation problem in JMDPs. This paper develops optimal-control methods. We define a nonparametric distributional Bellman optimality operator for JMDPs, and prove that when the induced marginal MDP has a unique optimal policy, its iterates converge in Wasserstein distance to the optimal joint return law. For the first two moments, we establish convergence under a weaker condition that permits several mean-optimal actions as long as their tie resolutions share a second-moment fixed point. We also derive sampled targets for neural approximation. 
\end{abstract}

\section{Introduction}

In an ordinary Markov decision process (MDP), the agent observes a single sampled outcome at a time: the state transition and the reward produced by the action it executes. This abstraction is sufficient for standard expected-return control. However, in so-called coupled-dynamics environments, where the one-step outcomes of different actions at the same state are generated by some shared randomness, such a model can discard rich structure. In a coupled-dynamics environment, the one-step outcomes of several counterfactual actions can be queried under the same realization of exogenous randomness through a \emph{joint sample transition model} \citep{kaya2026joint}, even though the agent still commits to only one action. The coordinate marginals of these outcomes define an ordinary MDP, but marginalization discards information relating the dependence among counterfactual rewards and transitions.

\citet{kaya2026joint} proposed joint MDPs (JMDPs) as a formalism that can retain such information. A JMDP generates an ordinary MDP through marginalization. Thus, executed policies and their expected returns remain compatible with the ordinary MDP formulation. However, in addition, a JMDP also maintains a coupled one-step outcome table $((R^a, S'^a))_{a \in \cA}$ specifying counterfactual rewards and successor states for every action permissible at a given state $s$. This makes comparative cross-action distributional quantities well-defined, including covariances, action-gap uncertainty, probabilities of superiority, and one can then derive the one-step coupling regime, fixed-policy joint return laws, and Bellman operators for joint moments. However, in spite of these modeling advantages provided by the JMDP formalism, the optimal control problem for JMDPs remains open. Extending the prior fixed-policy results to control requires accounting for a greedy selector that changes with the current mean estimates. Distinct mean-optimal policies can also induce different joint return laws even when they have the same value function.

We study control at the distributional and moment levels. In particular, we show that the nonparametric Bellman operator converges under a unique mean-optimal policy because ordinary value iteration eventually fixes the greedy selector, after which the updates follow a contractive fixed-policy operator. We then show that the first two moments converge under a weaker condition that permits optimal ties with a common second-moment fixed point. We also derive sampled moment targets for function approximation, with each cross-moment continuation evaluated at both successor state--action pairs. The experiments examine exact distributional recursion, finite-dimensional moment updates, covariance-dependent allocation, and neural estimation from coupled samples.

\noindent\textbf{Contributions.}
\begin{itemize}[leftmargin=*]
\item We define a nonparametric JMDP Bellman optimality operator for joint return laws.
\item We prove convergence of its risk-neutral iterates under a unique optimal policy, and convergence of second moments under a weaker common fixed-point condition.
\item We give sampled first- and second-moment targets whose cross term depends on both successor state--action pairs.
\item We validate the distributional and moment recursions in exact tabular problems and show in a continuous-state experiment that coupled branch samples identify joint quantities lost by marginal or shuffled samples.
\end{itemize}

\section{Related work}

\noindent\textbf{Distributional reinforcement learning (DRL).} Early return-distribution methods estimated parametric or nonparametric return laws from distributional Bellman equations \citep{morimura,morimura-parametric}. Modern deep distributional RL was revived by categorical value distributions \citep{c51}, followed by quantile and implicit quantile representations \citep{iqn,qr-dqn,FQF}. These methods are powerful marginal estimators of the return distribution for each action, and risk-aware policies can be obtained from the estimated marginal quantiles \citep{iqn}. They do not, however, by themselves identify the joint law of counterfactual action returns at a state. This joint law is of importance whenever the statistic of interest is comparative rather than marginal.

\noindent\textbf{Joint return information.} The JMDP framework of \citet{kaya2026joint} is the starting point for the current work, which formalizes coupled-dynamics environments through joint sample transition models, defines joint return laws, and proves fixed-policy joint policy evaluation results for finite-order moments. We take JMDPs as established prior work and study the problem of optimal control. Closely related motivations also appear in work on action-gap random variables and probabilities of superiority \citep{wiltzer2024action}, where the central object is the joint comparison of action returns rather than a collection of marginal distributions.

\noindent\textbf{Risk-sensitive and risk-aware control.} Working solely with expected returns suppresses variability that can matter in safety-critical and allocation problems. Classical alternatives include variance-sensitive criteria and mean--variance MDPs \citep{sobel,mv-mdp,tamar2}, coherent risk measures \citep{artzner1999coherent}, and CVaR optimization \citep{rockafellar2000optimization,cvar-mdp}. These objectives are usually functionals of the return from one executed action or policy and can therefore be evaluated from its marginal return law. However, joint information is needed for a different class of questions. Comparing two actions requires their joint return law, while evaluating the variance of an allocation across actions requires their cross-action covariances.

\noindent\textbf{Multivariate returns and counterfactual reasoning.} Multivariate DRL often refers to vector-valued rewards or multi-objective returns \citep{bellman-gan,multidimensional-reward,wiltzer2024foundations}. In that literature, the components of the return vector are jointly generated by a single executed action. In a JMDP, they instead index the returns associated with mutually exclusive counterfactual actions. Counterfactual methods for sequential decisions instead condition a structural causal model on observed experience to infer outcomes under unexecuted actions, with applications to off-policy evaluation and policy search \citep{oberst2019counterfactual,buesing2019woulda}. Such inferences require a causal mechanism that couples potential outcomes. However, the transition marginals of a regular MDP do not identify that mechanism. Gumbel-max SCMs select one coupling, whereas generalized Gumbel mechanisms learn couplings for specified query criteria and copula and robust approaches characterize or bound the range of compatible counterfactuals \citep{lorberbom2021learning,haugh2023counterfactualanalysisdynamiclatent,lally2025robust}. Counterfactual realizability asks when a joint counterfactual distribution can instead be sampled through an admissible experiment \citep{raghavan2025counterfactual}. JMDPs make an explicit access assumption of this kind: the coupled action-outcome table is part of the model or sampling interface. We therefore study prospective control under a given coupling and not retrospective inference from ordinary trajectories.

\section{Preliminaries}\label{sec:prelim}
\subsection{MDPs and marginal distributional control}

\begin{definition}\label{defn:mdp}
A finite discounted MDP is a tuple $\cM = (\cS, \cA, P, \gamma)$ where $\cS$ and $\cA$ are finite state and action spaces, $P:\cS\times\cA\to \cP([0,1]\times \cS)$ is the joint reward-transition kernel, and $\gamma \in (0, 1)$ is the discount factor. For $(R,S')\sim P(\cdot \mid s,a)$, we write the mean reward as $r(s,a) = \bE[R \mid s,a]$ and write $P_S(\cdot \mid s,a)$ for the state marginal of $P$.
\end{definition}

For a stationary Markov policy $\pi$, the return starting from $(s, a)$ is
\begin{equation}
Z^\pi(s, a) := \sum_{t=0}^\infty \gamma^t R_t,
\end{equation}
with $S_0 = s$, $A_0 = a$, $(R_t,S_{t+1})\sim P(\cdot\mid S_t,A_t)$, and
$A_{t+1}\sim \pi(\cdot\mid S_{t+1})$ for $t\ge 0$. The mean return is the
usual action-value function $Q^\pi(s, a) = \bE[Z^\pi(s, a)]$. The optimal value
$Q^*$ is the fixed point of the expected Bellman optimality operator
$T:\bR^{\cS\times\cA}\to\bR^{\cS\times\cA}$:
\begin{equation}
(TQ)(s,a) := \bE [R \mid s, a] + \gamma \bE_{S' \sim P_S(\cdot \mid s, a)}\Bigl[\max_{b \in \cA} Q(S', b) \Bigr].
\end{equation}

DRL tracks the law $\eta^\pi(s,a) := \Law\bigl(Z^\pi(s, a)\bigr) \in \cP(\bR)$, which satisfies the distributional Bellman equation
\begin{equation}
Z^\pi(s, a) \overset{D}{=} R(s, a) + \gamma Z^\pi(S', A').
\end{equation}
Here, $(R, S')$ is sampled jointly from $P(\cdot \mid s, a)$ and $A' \sim \pi(\cdot \mid S')$. The distributional Bellman optimality operator replaces $A'$ by a greedy action chosen from the mean of the candidate return distribution. Unlike the expected Bellman optimality operator, the distributional Bellman optimality operator is not contractive. Classical distributional control nevertheless converges in the presence of a unique optimal policy because the greedy policy stabilizes after the value estimates are sufficiently accurate \citep[Chapter 7]{DRL-textbook}.

\section{JMDPs and joint moment evaluation}\label{sec:moment}

\begin{definition}
A finite JMDP is a tuple $\cJ = (\cS, \cA, \gamma, J)$, where $\cS$, $\cA$ and $\gamma$ are as in Definition~\ref{defn:mdp} and $J$ is a Markov kernel over full one-step counterfactual outcome tables:
\begin{equation}
J(\cdot \mid s) \in \cP\Bigl(\bigl([0, 1] \times \cS \bigr)^{\lvert \cA \rvert} \Bigr).
\end{equation}

At state $s$, the environment draws $\bigl((R^a, S'^{a})\bigr)_{a \in \cA} \sim J (\cdot \mid s)$. We call the action-indexed coordinate $(R^a,S'^a)$ the one-step branch for action $a$. The agent executes one action $A$ and observes the realized branch $(R^A,S'^A)$. The remaining branches remain counterfactual observations.
\end{definition}

The induced marginal MDP $\cM_\cJ$ has joint reward-transition kernel $P_\cJ(\cdot\mid s,a)$ equal to the $(R^a,S'^a)$ coordinate marginal of $J(\cdot\mid s)$. For a fixed policy $\pi$, the JMDP joint return law is
\begin{equation}
\eta^\pi(s) = \Law\bigl((Z^\pi(s, a))_{a \in \cA}\bigr) \in \cP(\bR^{\lvert \cA\rvert}).
\end{equation}
We use the one-step coupling of \citet{kaya2026joint}. At each recursion depth, branches at the same current state query one shared fresh outcome table. Branches at distinct states query independent fresh tables. The successor branches are then advanced jointly.

The first moment is $\mu^\pi(s, a) := Q^\pi(s,a) = \bE [Z^\pi(s,a)]$. The uncentered second moment for two state-action pairs is
\begin{equation}
M^\pi(s, a, \tilde s, \tilde a) := \bE [Z^\pi(s, a)\, Z^\pi(\tilde s, \tilde a)].
\end{equation}
For fixed $\pi$, the second moment Bellman equation is
\begin{equation}\label{eq:2ndmoment}
M^\pi(s, a, \tilde s, \tilde a) = \bE \bigl[R^a \tilde R^{\tilde a} + \gamma R^a\mu^\pi(\tilde S'^{\tilde a}, \tilde A') + \gamma \tilde R^{\tilde a} \mu^\pi(S'^a, A') + \gamma^2 M^\pi (S'^a, A', \tilde S'^{\tilde a}, \tilde A') \mid S= s, \tilde S = \tilde s \bigr],
\end{equation}
where, for distinct branches, $A' \sim \pi (\cdot \mid S'^a)$ and
$\tilde A' \sim \pi(\cdot \mid \tilde S'^{\tilde a})$ are drawn conditionally
independently given the one-step outcomes and independently of the
continuation returns, while for the same branch, $(s,a) = (\tilde s, \tilde a)$,
the outcome and action draws are shared, i.e., $\tilde A' = A'$. The one-step
outcomes are sampled according to the JMDP coupling when $s=\tilde s$ and
independently from the corresponding coordinate marginals when $s\neq\tilde s$.
In particular, the diagonal entries satisfy
$M^\pi(s,a,s,a) = \bE\bigl[Z^\pi(s,a)^2\bigr]$.

Under standard single-action execution, a stochastic policy samples its action independently of the unobserved outcome table. Its return law is therefore
\begin{equation}
\Law\bigl(Z^\pi(s,A)\bigr) = \sum_{a\in\cA}\pi(a\mid s)\,\Law\bigl(Z^\pi(s,a)\bigr).
\end{equation}
Future policy randomization is already included in each coordinate law. The coordinate marginals determine the realized return law under single-action execution. Cross-action dependence governs counterfactual comparisons and simultaneous allocations.

\section{Risk-neutral JMDP optimality}\label{sec:risk-neutral}

The fixed-policy theory of \citet{kaya2026joint} characterizes finite-order joint moments, and its $n$th-order Bellman operator is indexed by $(\cS\times\cA)^n$, which permits evaluation of each mixed continuation moment at the resulting tuple of successor state--action pairs. A distributional extension analogously requires a joint law for every such tuple. Let $d=|\cA|\ge2$, fix an ordering of the actions, and set $\cH_d=(\cS\times\cA)^d$. We call $h=((s_i,a_i))_{i=1}^d\in\cH_d$ a configuration. This construct records the state and immediate action associated with each coordinate of a $d$-dimensional return vector. The individual pairs $(s_i,a_i)$ label scalar coordinates, while the law indexed by the full configuration is their joint distribution. The usual JMDP action-return vector at $s$ corresponds to $h_s=((s,a))_{a\in\cA}$, where all coordinates begin at $s$ and the actions enumerate $\cA$. Under a deterministic policy $\pi$, one backup maps $h_s$ to the random successor configuration $h'=((S'^a,\pi(S'^a)))_{a\in\cA}$. Since the states $S'^a$ may differ, $h'$ need not equal $h_x$ for any $x\in\cS$. We therefore define the distributional operator on laws indexed by $\cH_d$, with the original state-indexed JMDP law recovered at $h_s$.

For a metric space $\cX$, let $\cP_p(\cX)$ denote the probability measures with finite $p$th moment. We consider families
\begin{equation}
\eta\in\cP_p(\bR^d)^{\cH_d}
\end{equation}
that assign a joint law to every configuration. The measure $\eta(h)$ specifies the $d$ scalar marginals and their dependence. The family is marginally consistent when the scalar law attached to a state--action pair is independent of the configuration in which that pair appears. Writing $\operatorname{pr}_i$ for the $i$th coordinate projection, this condition is equivalently stated as
\begin{equation}
(\operatorname{pr}_i)_\#\eta(h)=(\operatorname{pr}_j)_\#\eta(\tilde h)\quad\text{whenever}\quad h_i=\tilde h_j.
\label{eq:marginal-consistency}
\end{equation}
Let $\cE_p$ be the set of marginally consistent families, i.e., families in which every occurrence of a given state--action pair $(s,a)$ has the same scalar marginal, denoted by $\eta_{s,a}$. We fix a deterministic policy $\pi$ and a configuration $h=((s_i,a_i))_{i=1}^d\in\cH_d$. To define $\cT^\pi\eta(h)$, we sample one outcome table from $J(\cdot\mid\bar s)$ for every distinct state $\bar s$ among $s_1,\ldots,s_d$, using independent tables at distinct states. Let $(R_i,S'_i)$ be the coordinate for action $a_i$ in the table sampled at $s_i$, and set
\begin{equation}
h'=((S'_i,\pi(S'_i)))_{i=1}^d,\qquad (\cT^\pi\eta)(h)=\Law\bigl((R_i+\gamma Z'_i)_{i=1}^d\bigr),\qquad Z'\sim\eta(h').
\label{eq:configuration-operator}
\end{equation}
If coordinates that currently occupy different states reach the same successor state $x$, their entries in $h'$ both have state component $x$. The following backup therefore samples one table at $x$ and both coordinates read their outcomes from that table. Coordinates whose successor states remain different continue with independent tables. Conditional on $h'$, the vector $Z'$ is independent of the sampled one-step tables and is drawn jointly from $\eta(h')$. The fixed-policy branch process from Section~\ref{sec:moment} induces a family $\eta^\pi_{\cH}\in\cE_p$ satisfying $\cT^\pi\eta^\pi_{\cH}=\eta^\pi_{\cH}$, with $\eta^\pi_{\cH}(h_s)$ equal to the JMDP joint return law $\eta^\pi(s)$.

For $\xi,\upsilon\in\cP_p(\bR^d)$, let $\Pi(\xi,\upsilon)$ be their set of couplings. The ordinary $p$-Wasserstein distance with ground metric $\|x-y\|_\infty=\max_i|x_i-y_i|$ is
\begin{equation}
W_p(\xi,\upsilon)=\inf_{\lambda\in\Pi(\xi,\upsilon)}\left(\bE_{(X,Y)\sim\lambda}\left[\|X-Y\|_\infty^p\right]\right)^{1/p}.
\label{eq:wasserstein}
\end{equation}

On $\cE_p$, take the largest Wasserstein distance across configurations:
\begin{equation}
D_p(\eta,\zeta)=\max_{h\in\cH_d}W_p\bigl(\eta(h),\zeta(h)\bigr).
\label{eq:configuration-metric}
\end{equation}

\begin{proposition}\label{prop:contraction}
Every deterministic stationary policy $\pi$ induces a map $\cT^\pi:\cE_p\to\cE_p$. For all $\eta,\zeta\in\cE_p$,
\begin{equation}
D_p(\cT^\pi\eta,\cT^\pi\zeta)\le\gamma D_p(\eta,\zeta).
\end{equation}
\end{proposition}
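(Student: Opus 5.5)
The proof has two parts: first show that $\cT^\pi$ maps $\cE_p$ into $\cE_p$, then establish the contraction by a conditioning-and-gluing coupling argument. Both parts rest on the construction in \eqref{eq:configuration-operator}.

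\textbf{Well-definedness and marginal consistency.} First, $(\cT^\pi\eta)(h)$ has finite $p$th moment. Rewards lie in $[0,1]$, and $h'$ ranges over the finite set $\cH_d$, so the law of $Z'$ is a finite mixture of the measures $\eta(h'')$, each in $\cP_p(\bR^d)$; hence $R+\gamma Z'$ has finite $p$th moment. Next, fix $h$, $\tilde h$ with $h_i=\tilde h_j=(s,a)$. The $i$th coordinate of $(\cT^\pi\eta)(h)$ is $R_i+\gamma Z'_i$. Here $(R_i,S'_i)$ is the $a$-coordinate of a table drawn from $J(\cdot\mid s)$, so its law is the marginal $P_\cJ(\cdot\mid s,a)$ regardless of the other coordinates of $h$. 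Conditional on the tables, $Z'_i$ has law $(\operatorname{pr}_i)_\#\eta(h')=\eta_{S'_i,\pi(S'_i)}$ by marginal consistency of $\eta$, and this depends only on $S'_i$. Hence the law of $R_i+\gamma Z'_i$ is the mixture $\int \Law(r+\gamma Y_{x})\,P_\cJ(dr,dx\mid s,a)$ with $Y_x\sim\eta_{x,\pi(x)}$. The same expression holds for coordinate $j$ of $(\cT^\pi\eta)(\tilde h)$, so \eqref{eq:marginal-consistency} holds for $\cT^\pi\eta$.

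\textbf{Contraction.} Fix $h$. For each of the finitely many $g\in\cH_d$, choose an optimal coupling $\lambda_g\in\Pi(\eta(g),\zeta(g))$ attaining $W_p(\eta(g),\zeta(g))$; such a coupling exists for $\cP_p$ measures. Then build a common probability space as follows:
\begin{itemize}[leftmargin=*]
\item draw the one-step tables once and form $(R_i,S'_i)_i$ and $h'$;
\item conditional on $h'=g$, draw $(Z',\tilde Z')\sim\lambda_g$, independently of the tables given $h'$.
\end{itemize}
Then $X=(R_i+\gamma Z'_i)_i$ has law $(\cT^\pi\eta)(h)$ and $Y=(R_i+\gamma\tilde Z'_i)_i$ has law $(\cT^\pi\zeta)(h)$, because the conditional structure matches \eqref{eq:configuration-operator} exactly. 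Since the same rewards appear in both vectors, $\|X-Y\|_\infty=\gamma\|Z'-\tilde Z'\|_\infty$. Therefore
\begin{equation*}
W_p\bigl((\cT^\pi\eta)(h),(\cT^\pi\zeta)(h)\bigr)^p\le\gamma^p\,\bE\Bigl[\bE\bigl[\|Z'-\tilde Z'\|_\infty^p\mid h'\bigr]\Bigr]=\gamma^p\,\bE\bigl[W_p(\eta(h'),\zeta(h'))^p\bigr]\le\gamma^p D_p(\eta,\zeta)^p.
\end{equation*}
Taking $p$th roots and then the maximum over $h$ gives $D_p(\cT^\pi\eta,\cT^\pi\zeta)\le\gamma D_p(\eta,\zeta)$.

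The main technical care is in the gluing step: the rewards and $h'$ must be shared between the two vectors, and the coupled continuations must be conditionally independent of the tables given $h'$. This is what lets the reward terms cancel exactly. Because $\cH_d$ is finite, the measurable choice of $\lambda_g$ is trivial. The other point needing care is the marginal-consistency argument, specifically that the one-step marginal of coordinate $i$ is unaffected by the other coordinates sharing or not sharing a table. This holds because sharing a table changes only the dependence between coordinates, not any single coordinate marginal.
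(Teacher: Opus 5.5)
Your proof is correct and follows the paper's argument essentially step for step. The paper likewise gets finite $p$th moments from bounded rewards, gets marginal consistency from coordinate $i$ of $(\cT^\pi\eta)(h)$ being the law of $R^a+\gamma Z$ (with $(R^a,S'^a)$ the $a$-coordinate of a $J(\cdot\mid s)$ table and $Z\sim\eta_{S'^a,\pi(S'^a)}$), and gets the contraction by sharing the one-step tables and drawing an optimal coupling of $\eta(h')$ and $\zeta(h')$ conditionally on $h'$; your extra remarks (the finite-mixture bound over $\cH_d$, and that table sharing changes only dependence, not single-coordinate marginals) just make explicit what the paper leaves implicit.
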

\begin{proof}
Bounded rewards and the finite $p$th moments of the continuation laws imply that every output law has finite $p$th moment. Consider two coordinates with $h_i=\tilde h_j=(s,a)$. Under their respective updates, both coordinate marginals are the law of $R^a+\gamma Z$, where $((R^c,S'^c))_{c\in\cA}\sim J(\cdot\mid s)$ and, conditional on $S'^a$, $Z\sim\eta_{S'^a,\pi(S'^a)}$. This law depends only on $(s,a)$, so $\cT^\pi\eta$ satisfies \eqref{eq:marginal-consistency}.

Now fix $h\in\cH_d$ and couple the two updates using the same one-step tables in every current-state group. For each possible successor configuration $h'$, choose an optimal coupling $(U,V)$ of $\eta(h')$ and $\zeta(h')$. Such a coupling exists for the continuous cost $\|\cdot\|_\infty^p$ on $\bR^d$. Since $\cH_d$ is finite, the couplings can be fixed for every possible $h'$ and drawn independently of the rewards conditional on $h'$. The coupled outputs are $Y=R+\gamma U$ and $\widetilde Y=R+\gamma V$, hence
\begin{equation}
W_p\bigl((\cT^\pi\eta)(h),(\cT^\pi\zeta)(h)\bigr)^p\le \bE\left[\|Y-\widetilde Y\|_\infty^p\right] =\gamma^p\bE_{h'}\left[W_p\bigl(\eta(h'),\zeta(h')\bigr)^p\right]\le\gamma^pD_p(\eta,\zeta)^p.
\end{equation}
The first inequality evaluates the Wasserstein infimum at this coupling. Taking $p$th roots and maximizing over $h$ proves the result.
\end{proof}

Proposition~\ref{prop:contraction} assumes a fixed continuation policy. Under greedy continuation, equal-mean actions may carry different return laws, so a tie can change the law selected in the next backup. Let us define
\begin{equation}
Q_\eta(s,a)=\bE_{Z\sim\eta_{s,a}}[Z].
\end{equation}
For the fixed-policy family, $Q_{\eta^\pi_{\cH}}(s,a)=Q^\pi(s,a)$. Fix a deterministic tie-breaking rule, let
\begin{equation}
[G(\eta)](s)\in\arg\max_{a\in\cA}Q_\eta(s,a),
\end{equation}
and define $\cT_G\eta:=\cT^{G(\eta)}\eta$. Under the next condition, the greedy policy eventually stops changing and stabilizes at a fixed policy.
\begin{condition}[Unique risk-neutral optimal policy]\label{cond:unique-opt}
The induced marginal MDP $\cM_\cJ$ has a unique optimal action $\pi^*(s)$ at every state. Since $\cS$ and $\cA$ are finite, the action gap
\begin{equation}
\Delta^* = \min_{s \in \cS}\Bigl(Q^*\bigl(s, \pi^*(s)\bigr) - \max_{a \ne \pi^*(s)} Q^*(s, a) \Bigr) > 0.
\end{equation}
\end{condition}
\begin{theorem}\label{thm:unique-opt}
Under Condition~\ref{cond:unique-opt}, for any $p\ge1$ and $\eta_0\in\cE_p$, the iterates $\eta_{k+1}=\cT_G\eta_k$ satisfy
\begin{equation}
D_p(\eta_k,\eta^{\pi^*}_{\cH})\to0.
\end{equation}
In particular, $W_p(\eta_k(h_s),\eta^{\pi^*}_{\cH}(h_s))\to0$ for every $s$.
\end{theorem}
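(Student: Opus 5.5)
The strategy is to reduce the greedy recursion to ordinary value iteration on the induced marginal MDP $\cM_\cJ$. After finitely many steps the greedy selector is frozen at $\pi^*$, and from then on Proposition~\ref{prop:contraction} applies with the fixed policy $\pi^*$.

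\textbf{Step 1: the means follow value iteration.} Set $Q_k:=Q_{\eta_k}$, which is well defined because $\eta_k\in\cE_p$ and $p\ge1$. By the marginal-consistency argument in the proof of Proposition~\ref{prop:contraction}, the scalar marginal of $(\cT_G\eta_k)_{s,a}$ is the law of $R^a+\gamma Z$. Here $(R^a,S'^a)\sim P_\cJ(\cdot\mid s,a)$ and, conditional on $S'^a$, $Z\sim(\eta_k)_{S'^a,G(\eta_k)(S'^a)}$. Taking expectations, and using that $G(\eta_k)(x)$ attains $\max_bQ_k(x,b)$ whatever the tie-breaking rule, gives
\begin{equation*}
Q_{k+1}(s,a)=r(s,a)+\gamma\,\bE_{S'\sim P_S(\cdot\mid s,a)}\Bigl[\max_{b}Q_k(S',b)\Bigr]=(TQ_k)(s,a).
\end{equation*}
So $Q_k=T^kQ_0$. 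Since $T$ is a $\gamma$-contraction in sup norm, $\|Q_k-Q^*\|_\infty\le\gamma^k\|Q_0-Q^*\|_\infty\to0$.

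\textbf{Step 2: the selector freezes.} Choose $K$ such that $\|Q_k-Q^*\|_\infty<\Delta^*/2$ for all $k\ge K$. Then for every $s$ and every $a\ne\pi^*(s)$, Condition~\ref{cond:unique-opt} gives
\begin{equation*}
Q_k(s,\pi^*(s))>Q^*(s,\pi^*(s))-\Delta^*/2\ge Q^*(s,a)+\Delta^*/2>Q_k(s,a).
\end{equation*}
The argmax is therefore the singleton $\{\pi^*(s)\}$, so $G(\eta_k)=\pi^*$ irrespective of the tie-breaking rule, and $\eta_{k+1}=\cT^{\pi^*}\eta_k$ for all $k\ge K$.

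\textbf{Step 3: contraction from $K$ on.} The fixed-policy family satisfies $\cT^{\pi^*}\eta^{\pi^*}_{\cH}=\eta^{\pi^*}_{\cH}$ and lies in $\cE_p$. Applying Proposition~\ref{prop:contraction} repeatedly gives
\begin{equation*}
D_p(\eta_k,\eta^{\pi^*}_{\cH})\le\gamma^{k-K}D_p(\eta_K,\eta^{\pi^*}_{\cH}).
\end{equation*}
It remains to check that the right-hand side is finite. Each $\eta_k$ stays in $\cE_p$ by the first part of Proposition~\ref{prop:contraction}, applied to the policy $G(\eta_k)$. Laws in $\cP_p$ are at finite $W_p$ distance from one another, and $\cH_d$ is finite, so $D_p(\eta_K,\eta^{\pi^*}_{\cH})<\infty$. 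Hence $D_p(\eta_k,\eta^{\pi^*}_{\cH})\to0$. The statement at $h_s$ follows because $W_p(\eta_k(h_s),\eta^{\pi^*}_{\cH}(h_s))\le D_p(\eta_k,\eta^{\pi^*}_{\cH})$.

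\textbf{Main obstacle.} The delicate step is Step 1: verifying that the mean of each coordinate evolves exactly by $T$, even though the operator acts on configurations and couples branches across coordinates. Cross-coordinate coupling does not affect scalar marginals, so the computation reduces to the single-coordinate law given by marginal consistency. Some care is also needed with ties in the early iterations: the argument only uses that the selected action attains the maximum of $Q_k$, so an arbitrary deterministic tie-breaking rule is harmless.
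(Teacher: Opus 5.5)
Your proof is correct and follows the paper's argument: the coordinate means evolve exactly by value iteration $T$, the action gap $\Delta^*$ freezes the greedy selector at $\pi^*$ after finitely many steps, and Proposition~\ref{prop:contraction} then contracts the iterates toward $\eta^{\pi^*}_{\cH}$. The only difference is minor. The paper invokes completeness of $(\cE_p,D_p)$ to obtain the unique fixed family. You instead use directly that $\eta^{\pi^*}_{\cH}\in\cE_p$ is a fixed point of $\cT^{\pi^*}$ and check that $D_p(\eta_K,\eta^{\pi^*}_{\cH})<\infty$, which is all the convergence claim needs.
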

\begin{proof}
Let $q_k=Q_{\eta_k}$. Taking any coordinate marginal in \eqref{eq:configuration-operator} and using \eqref{eq:marginal-consistency} gives
\begin{equation}
q_{k+1}(s,a)=\bE_{((R^c,S'^c))_{c\in\cA}\sim J(\cdot\mid s)}\left[R^a+\gamma q_k\bigl(S'^a,G(\eta_k)(S'^a)\bigr)\right]=(Tq_k)(s,a).
\end{equation}
Hence $\|q_k-Q^*\|_\infty\le\gamma^k\|q_0-Q^*\|_\infty$. Choose $K$ so that $\|q_k-Q^*\|_\infty<\Delta^*/2$ for every $k\ge K$. For any $a\ne\pi^*(s)$,
\begin{equation}
q_k\bigl(s, \pi^*(s)\bigr) - q_k(s, a) \ge Q^*\bigl(s, \pi^*(s)\bigr) - Q^*(s, a) - 2 \lVert q_k - Q^*\rVert_\infty >0,
\end{equation}
which shows that $G(\eta_k)=\pi^*$ for all $k\ge K$. The Wasserstein space $\cP_p(\bR^d)$ is complete. Since $\cH_d$ is finite and marginal consistency is preserved under limits, $(\cE_p,D_p)$ is also complete. Proposition~\ref{prop:contraction} then gives the unique fixed family $\eta^{\pi^*}_{\cH}$. For every $n\ge0$,
\begin{equation}
D_p(\eta_{K+n},\eta^{\pi^*}_{\cH})\le\gamma^nD_p(\eta_K,\eta^{\pi^*}_{\cH})\to0.
\end{equation}
\end{proof}

\begin{corollary}\label{cor:moment2}
With $p=2$, the coordinate means and pairwise second moments of $\eta_k(h_s)$ converge uniformly in $s$ to those of the optimal joint return law.
\end{corollary}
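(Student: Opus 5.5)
The plan is to deduce the corollary directly from Theorem~\ref{thm:unique-opt} with $p=2$. The key fact is that $W_2$ convergence in $\cP_2(\bR^d)$ controls first and second moments. The estimates should be uniform in $s$ because $\cS$ is finite and the theorem already gives convergence in $D_2$, which is a maximum over all configurations. So it suffices to bound, for each configuration $h$, the moment discrepancies between $\eta_k(h)$ and $\eta^{\pi^*}_{\cH}(h)$ by a quantity that depends only on $W_2(\eta_k(h),\eta^{\pi^*}_{\cH}(h))$ and on a uniform second-moment bound. We then specialize to $h=h_s$.

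Fix $h$ and let $(X,Y)$ be an optimal coupling of $\eta_k(h)$ and $\xi:=\eta^{\pi^*}_{\cH}(h)$, which exists as in the proof of Proposition~\ref{prop:contraction}. Write $\delta_k=D_2(\eta_k,\eta^{\pi^*}_{\cH})$.

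\emph{Means.} For each coordinate, $|\bE X_i-\bE Y_i|\le \bE\|X-Y\|_\infty\le(\bE\|X-Y\|_\infty^2)^{1/2}\le\delta_k$.

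\emph{Second moments.} For a pair of coordinates $(i,j)$, write
\begin{equation}
X_iX_j-Y_iY_j=(X_i-Y_i)X_j+Y_i(X_j-Y_j).
\end{equation}
Applying Cauchy--Schwarz to each term gives
\begin{equation}
|\bE X_iX_j-\bE Y_iY_j|\le\delta_k\Bigl((\bE X_j^2)^{1/2}+(\bE Y_i^2)^{1/2}\Bigr).
\end{equation}

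It remains to bound the second moments uniformly in $k$ and $h$. For the limit law this is immediate, since rewards lie in $[0,1]$ and so $|Y_i|\le 1/(1-\gamma)$. For the iterates, Minkowski's inequality gives $(\bE X_j^2)^{1/2}\le(\bE Y_j^2)^{1/2}+\delta_k$. Since $\delta_k\to0$, this is bounded by $1/(1-\gamma)+\sup_k\delta_k<\infty$; the supremum is finite because $\delta_k$ is a convergent sequence.

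Combining these estimates, the maximum over $s$, over coordinates, and over coordinate pairs of the moment errors is at most $C\delta_k$ for a constant $C$. This tends to zero by Theorem~\ref{thm:unique-opt}.

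The main obstacle is the uniform second-moment bound on the iterates. Without it, the Cauchy--Schwarz step gives only pointwise control. The Minkowski argument above handles this. Alternatively, one could show that $\cT_G$ preserves a bound of the form $\max_h(\bE\|Z\|_\infty^2)^{1/2}\le B$ whenever $B\ge 1+\gamma B$. One should also note that the ``pairwise second moments'' of $\eta_k(h_s)$ are the entries $\bE[Z_aZ_b]$. For the optimal law these coincide with $M^{\pi^*}(s,a,s,b)$, since $\eta^{\pi^*}_{\cH}(h_s)=\eta^{\pi^*}(s)$.
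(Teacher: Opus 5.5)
Your proposal is correct and takes essentially the same route as the paper: couple $\eta_k(h_s)$ with the optimal law, bound the mean gap via $\bE\|X-Y\|_\infty$, and split $X_iX_j-Y_iY_j$ into two error terms controlled by Cauchy--Schwarz and a bounded second-moment factor. Your Minkowski bound and your use of $D_2$ as a maximum over configurations make explicit, with the rate $C\delta_k$, the uniform second-moment control and the uniformity in $s$ that the paper only asserts briefly.
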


\begin{proof}
Fix $s\in\cS$. Choose couplings $(Z_k,Z)$ of $\eta_k(h_s)$ and $\eta^{\pi^*}_{\cH}(h_s)$ such that
\begin{equation}
\bE\left[\|Z_k-Z\|_\infty^2\right]\to0.
\end{equation}
For each coordinate $a$,
\begin{equation}
\left|\bE[Z_{k,a}]-\bE[Z_a]\right|\le\bE\left[|Z_{k,a}-Z_a|\right]\le\left(\bE\left[\|Z_k-Z\|_\infty^2\right]\right)^{1/2}\to0.
\end{equation}
For pairwise second moments,
\begin{equation}
|Z_{k,a}Z_{k,b}-Z_aZ_b|\le|Z_{k,a}-Z_a|\,|Z_{k,b}|+|Z_a|\,|Z_{k,b}-Z_b|.
\end{equation}
Finally, by Cauchy--Schwarz,
\begin{equation}
\bE\left[|Z_{k,a}Z_{k,b}-Z_aZ_b|\right]\le\left(\bE\left[\|Z_k-Z\|_\infty^2\right]\right)^{1/2}\cdot\left\{\left(\bE\left[\|Z_k\|_\infty^2\right]\right)^{1/2}+\left(\bE\left[\|Z\|_\infty^2\right]\right)^{1/2}\right\}.
\end{equation}
The factor in braces remains bounded because $W_2$ convergence controls second moments. Taking maxima over the finite state and action sets proves the claim.
\end{proof}

The same argument gives mixed moments up to any fixed integer degree $n$ by applying Theorem~\ref{thm:unique-opt} with $p=n$. We state the result for order two as a representative and because that is the case used in the following section.

\subsection{A weaker condition for moment control}

Ordinary value iteration gives convergence of the means without Condition~\ref{cond:unique-opt} because all optimal actions have the same value. Ties can still change the continuation variance and cross-action covariance, so convergence of second moments requires an additional condition. Let $\cA^*(s)=\arg\max_{a\in\cA}Q^*(s,a)$. For any selector $\sigma$ satisfying $\sigma(s)\in\cA^*(s)$, define
\begin{equation}\label{eq:optimal-2ndmoment}
\begin{aligned}
\bigl[\cB^*_\sigma M\bigr](s,a,\tilde s,\tilde a)=\bE\Bigl[&
R^a\tilde R^{\tilde a}+\gamma R^a Q^*\bigl(\tilde S'^{\tilde a},\sigma(\tilde S'^{\tilde a})\bigr)+\gamma\tilde R^{\tilde a}Q^*\bigl(S'^a,\sigma(S'^a)\bigr)\\
&+\gamma^2M\bigl(S'^a,\sigma(S'^a),\tilde S'^{\tilde a},\sigma(\tilde S'^{\tilde a})\bigr)\,\Bigm|\,S=s,\tilde S=\tilde s\Bigr].
\end{aligned}
\end{equation}
The expectation uses the one-step coupling from \eqref{eq:2ndmoment}. Each $\cB^*_\sigma$ is a $\gamma^2$-contraction in the sup norm, as shown in the proof of Proposition~\ref{prop:2nd-indistin}, and therefore has a unique fixed point $M^\sigma$.
\begin{condition}[Common second-moment fixed point across optimal selectors]\label{cond:weaker}
For every pair of optimal selectors $\sigma$ and $\sigma'$, the fixed points agree:
\begin{equation}
M^\sigma = M^{\sigma'} =: M^*.
\end{equation}
\end{condition}
\begin{proposition}\label{prop:2nd-indistin}
Assume Condition~\ref{cond:weaker}. Start from any $q_0\in\bR^{\cS\times\cA}$ and $M_0\in\bR^{(\cS\times\cA)^2}$. At each iteration, choose any greedy selector
\begin{equation}
\sigma_k(s)\in\arg\max_{c\in\cA}q_k(s,c).
\end{equation}
Update the mean by ordinary value iteration,
\begin{equation}
q_{k+1}(s,a)=r(s,a)+\gamma\sum_{s'\in\cS}P_S(s'\mid s,a)\max_{c\in\cA}q_k(s',c),
\end{equation}
and update the second moment by
\begin{equation}
\begin{aligned}
M_{k+1}(s,a,\tilde s,\tilde a)=\bE\Bigl[&
R^a\tilde R^{\tilde a}+\gamma R^a q_k\bigl(\tilde S'^{\tilde a},\sigma_k(\tilde S'^{\tilde a})\bigr)+\gamma\tilde R^{\tilde a}q_k\bigl(S'^a,\sigma_k(S'^a)\bigr)\\
&+\gamma^2M_k\bigl(S'^a,\sigma_k(S'^a),\tilde S'^{\tilde a},\sigma_k(\tilde S'^{\tilde a})\bigr)\,\Bigm|\,S=s,\tilde S=\tilde s\Bigr].
\end{aligned}
\end{equation}
The expectation again uses the coupling from \eqref{eq:2ndmoment}. Then $q_k\to Q^*$ and $M_k\to M^*$ as $k\to\infty$, where $M^*$ is the common fixed point in Condition~\ref{cond:weaker}.
\end{proposition}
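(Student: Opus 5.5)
The plan has three steps: show the mean iterates converge, show the greedy selectors eventually choose only optimal actions, and then treat the second-moment update as a perturbed contraction.

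\textbf{Mean convergence.} The mean update is exactly ordinary value iteration $q_{k+1}=Tq_k$. Hence $\|q_k-Q^*\|_\infty\le\gamma^k\|q_0-Q^*\|_\infty\to0$, and the claim $q_k\to Q^*$ follows.

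\textbf{Selectors eventually optimal.} Without Condition~\ref{cond:unique-opt} we only get a weaker statement. Define the positive gap
\[
\delta=\min_s\Bigl(\max_a Q^*(s,a)-\max_{a\notin\cA^*(s)}Q^*(s,a)\Bigr),
\]
with $\delta=\infty$ at states where every action is optimal. As in the proof of Theorem~\ref{thm:unique-opt}, once $\|q_k-Q^*\|_\infty<\delta/2$ the greedy choice $\sigma_k(s)$ lies in $\cA^*(s)$. So there is a $K$ with $\sigma_k$ an optimal selector for all $k\ge K$. The selector may still switch among tied actions indefinitely.

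\textbf{Contraction of each $\cB^*_\sigma$.} For any selector, two inputs $M,M'$ differ only through the term
\[
\gamma^2\bE\bigl[M(S'^a,\sigma(S'^a),\tilde S'^{\tilde a},\sigma(\tilde S'^{\tilde a}))-M'(\cdots)\bigr].
\]
This term is bounded by $\gamma^2\|M-M'\|_\infty$, since the expectation is an average of entries. This gives the claimed $\gamma^2$-contraction and the fixed points $M^\sigma$.

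\textbf{Second-moment convergence.} For $k\ge K$, write
\[
M_{k+1}=\cB^*_{\sigma_k}M_k+e_k,
\]
where $e_k$ collects the terms with $q_k$ in place of $Q^*$. Since $\sigma_k(x)\in\cA^*(x)$, we have $|q_k(x,\sigma_k(x))-Q^*(x,\sigma_k(x))|\le\|q_k-Q^*\|_\infty$. Rewards lie in $[0,1]$, so $\|e_k\|_\infty\le2\gamma\|q_k-Q^*\|_\infty\to0$. Condition~\ref{cond:weaker} gives $\cB^*_{\sigma_k}M^*=M^{\sigma_k}=M^*$ for every $k\ge K$, regardless of how $\sigma_k$ switches. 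Therefore
\[
\|M_{k+1}-M^*\|_\infty\le\gamma^2\|M_k-M^*\|_\infty+\|e_k\|_\infty .
\]
This is the step I expect to be the crux. The time-varying selector is harmless only because the common fixed point makes $M^*$ invariant under every optimal selector, and the proof must use that invariance in exactly this form.

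\textbf{Closing the recursion.} A standard lemma finishes the argument: if $x_{k+1}\le\gamma^2x_k+\epsilon_k$ with $\epsilon_k\to0$ and $\gamma^2<1$, then $x_k\to0$. To see this, unroll the recursion as
\[
x_{K+n}\le\gamma^{2n}x_K+\sum_{j<n}\gamma^{2(n-1-j)}\epsilon_{K+j},
\]
and split the sum into early terms, which are killed by the geometric factor, and late terms, which are small. Here the bound is even geometric, since $\epsilon_k=O(\gamma^k)$. Hence $M_k\to M^*$.

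One detail needs checking in the same-branch case $(s,a)=(\tilde s,\tilde a)$. There the shared action draw uses the same $\sigma_k(S'^a)$ in both slots, so the argument goes through unchanged.
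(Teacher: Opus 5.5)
Your proposal is correct and follows the paper's proof essentially step for step: value iteration gives the means, a suboptimality gap forces $\sigma_k$ into $\cA^*$ after some $K$, and you write $M_{k+1}=\cB^*_{\sigma_k}M_k+\varepsilon_k$ with $\|\varepsilon_k\|_\infty\le 2\gamma\|q_k-Q^*\|_\infty$. Condition~\ref{cond:weaker} then yields $\|M_{k+1}-M^*\|_\infty\le\gamma^2\|M_k-M^*\|_\infty+\|\varepsilon_k\|_\infty$, which you unroll as the paper does (minor point: the bound $|q_k-Q^*|\le\|q_k-Q^*\|_\infty$ holds at any action, and optimality of $\sigma_k$ is needed only so that $\cB^*_{\sigma_k}$ is defined and fixes $M^*$).
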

\begin{proof}
The mean recursion is ordinary value iteration, so $\|q_k-Q^*\|_\infty\to0$. If a strictly suboptimal action exists, define
\begin{equation}
\Delta_{\mathrm{sub}} := \min_{s \in \cS, a \notin \cA^*(s)}\Bigl[\max_{b \in \cA}Q^*(s, b) - Q^*(s, a) \Bigr] > 0.
\end{equation}
Choose $K$ so that $\|q_k-Q^*\|_\infty<\Delta_{\mathrm{sub}}/2$ for $k\ge K$. If every action is optimal, take $K=0$. Then $\sigma_k(s)\in\cA^*(s)$ for all $k\ge K$. Write the update as $M_{k+1}=\cB^*_{\sigma_k}M_k+\varepsilon_k$, where $\varepsilon_k$ replaces the two occurrences of $Q^*$ in \eqref{eq:optimal-2ndmoment} by $q_k$. Since rewards lie in $[0,1]$,
\begin{equation}
\|\varepsilon_k\|_\infty\le2\gamma\|q_k-Q^*\|_\infty\to0.
\end{equation}
For every optimal selector $\sigma$, only the continuation term depends on $M$, giving
\begin{equation}
\|\cB^*_\sigma M-\cB^*_\sigma N\|_\infty\le\gamma^2\|M-N\|_\infty.
\end{equation}
The affine maps may differ, but Condition~\ref{cond:weaker} gives the common fixed point $\cB^*_{\sigma_k}M^*=M^*$. Hence
\begin{equation}
\lVert M_{k+1} - M^*\rVert_\infty \le \gamma^2 \lVert M_k - M^*\rVert_\infty + \lVert \varepsilon_k \rVert_\infty.
\end{equation}
Iterating from step $K$ gives
\begin{equation}
\lVert M_{K +n} - M^*\rVert_\infty \le \gamma^{2n}\lVert M_K - M^*\rVert_\infty + \sum_{i=0}^{n-1}\gamma^{2(n-1-i)}\lVert \varepsilon_{K+i}\rVert_\infty.
\end{equation}
The first term vanishes geometrically. The sum vanishes because $\varepsilon_k\to0$ and $\sum_{j\ge0}\gamma^{2j}<\infty$. Thus $M_k\to M^*$.
\end{proof}

Condition~\ref{cond:weaker} permits several optimal actions when every tie resolution yields the same second-moment fixed point. Differences in the return variances, covariances, or pairwise second moments induced by alternative optimal continuation actions can cause the condition to fail.

\section{Approximate joint moment control}\label{sec:approximate-control}
For large state spaces, we may fit the first two moment equations from sampled branch tables. We reserve $M$ for exact tabular moments and denote their parametric approximation by a mean head $\mu_\theta(s,a)$ and an uncentered second-moment head
\begin{equation}
\Sigma_{\theta,\psi}(s,a,\tilde s,\tilde a)\approx \bE[Z(s,a)Z(\tilde s,\tilde a)].
\end{equation}
The $\theta$ subscript for the second-moment head indicates that the second moment may also depend on the mean parameters. For example, a positive-semidefinite covariance kernel is obtained from
\begin{equation}
\Sigma_{\theta,\psi}(s,a,\tilde s,\tilde a)=\mu_\theta(s,a)\mu_\theta(\tilde s,\tilde a)+\phi_\psi(s,a)^\top\phi_\psi(\tilde s,\tilde a).
\end{equation}

Let $g_{\bar\theta}(s')\in\arg\max_{c\in\cA}\mu_{\bar\theta}(s',c)$, where bars denote fixed target parameters. A mean sample consists of $(s,a)$ and one outcome $(r,s')$ from the corresponding coordinate marginal of $J$, with its target being
\begin{equation}
y_\mu=r+\gamma\mu_{\bar\theta}(s',g_{\bar\theta}(s')).
\end{equation}

Second moments require samples of pairs $(s,a,\tilde s,\tilde a)$. When $s=\tilde s$, we draw one table from $J(\cdot\mid s)$ and use its $a$ and $\tilde a$ coordinates. When $s\ne\tilde s$, we draw independent tables at the two states and write the resulting outcomes as $(r,s')$ and $(\tilde r,\tilde s')$. Then, the sampled uncentered second-moment target is
\begin{equation}\label{eq:second-target}
y_\Sigma=r\tilde r+\gamma r\mu_{\bar\theta}(\tilde s',g_{\bar\theta}(\tilde s'))+\gamma\tilde r\mu_{\bar\theta}(s',g_{\bar\theta}(s'))+\gamma^2\Sigma_{\bar\theta,\bar\psi}\bigl(s',g_{\bar\theta}(s'),\tilde s',g_{\bar\theta}(\tilde s')\bigr).
\end{equation}

Note that the final term depends on both successor states. When $s'\ne\tilde s'$, it evaluates the continuation moment between two different state--action pairs. A covariance block local to either successor would discard this dependence.

Let $\nu_1$ be a sampling distribution over state--action pairs and let $\nu_2$ be a sampling distribution over pairs of state--action pairs. Let $\mathsf K_J(\cdot\mid s,a,\tilde s,\tilde a)$ denote the coupled pair-outcome kernel just described. The population objective is then
\begin{equation}
\mathcal L(\theta,\psi)=\bE_{\substack{(s,a)\sim\nu_1,\\(r,s')\sim P_\cJ(\cdot\mid s,a)}}\left[(\mu_\theta(s,a)-y_\mu)^2\right]+\lambda\bE_{\substack{(s,a,\tilde s,\tilde a)\sim\nu_2,\\(r,s',\tilde r,\tilde s')\sim\mathsf K_J(\cdot\mid s,a,\tilde s,\tilde a)}}\left[(\Sigma_{\theta,\psi}(s,a,\tilde s,\tilde a)-y_\Sigma)^2\right],
\label{eq:population-moment-loss}
\end{equation}
where the hyperparameter $\lambda\ge0$ balances the first- and second-moment terms. Training minimizes a sampled minibatch average of this objective. A queried action set supplies same-state pairs from one table. Propagated branch pairs supply the distinct-state rows used by the continuation target. In the parameterization above, gradients from the second-moment term pass through both $\theta$ and $\psi$, including the product $\mu_\theta(s,a)\mu_\theta(\tilde s,\tilde a)$.

\section{Experiments}\label{sec:experiments}

In this section, we first compare the distributional and moment control recursions with dynamic-programming and Monte Carlo references. We then study how cross-action covariance affects a risk-aware route choice and whether neural models can recover joint return information from sampled outcome tables.

\subsection{Exact distributional control in a coupled-reward chain}

We first use a simple environment in the form of a chain of states with a coupled reward structure. Two actions are available, and $\gamma=0.95$.  The unique-policy chain has five nonterminal states $\{0,\ldots,4\}$ followed by terminal state $5$, actions $\{0,1\}$, and $\gamma=0.95$. Action 0 advances one state and action 1 advances two states, clipped at the terminal state. At each nonterminal state, a shared Bernoulli variable selects reward vector $(1,0)$ or $(0.2,0.8)$ with equal probability. Action 0 is uniquely optimal at every nonterminal state. We run seven exact backups and compare the iterates with ordinary value iteration, fixed-policy distributional evaluation of the optimal law, and exact second-moment evaluation. Figure~\ref{fig:distributional-convergence} shows convergence of the law, means, and covariance blocks.

To contrast this convergent case with the behavior under optimal ties, we use a separate two-state JMDP. The zero-reward start state leads to a terminal choice state. At the choice state, two equiprobable outcome tables give action-0 rewards $0$ and $2$, while action 1 always gives reward $1$. Both actions have mean one but different return laws. In the lower panels of Figure~\ref{fig:distributional-convergence}, we deliberately alternate between the two mean-greedy tie resolutions over 12 backups, and the distributional iterates alternate between the two optimal laws. Figure~\ref{fig:distribution-support} returns to the unique-policy chain and shows its joint law at the start state. The two shared reward outcomes shift the coordinates in opposite directions, while the different successor depths give the coordinates different continuation ranges. This produces the offset bands in the figure.

\begin{figure}[t]
\centering
\includegraphics[width=\linewidth]{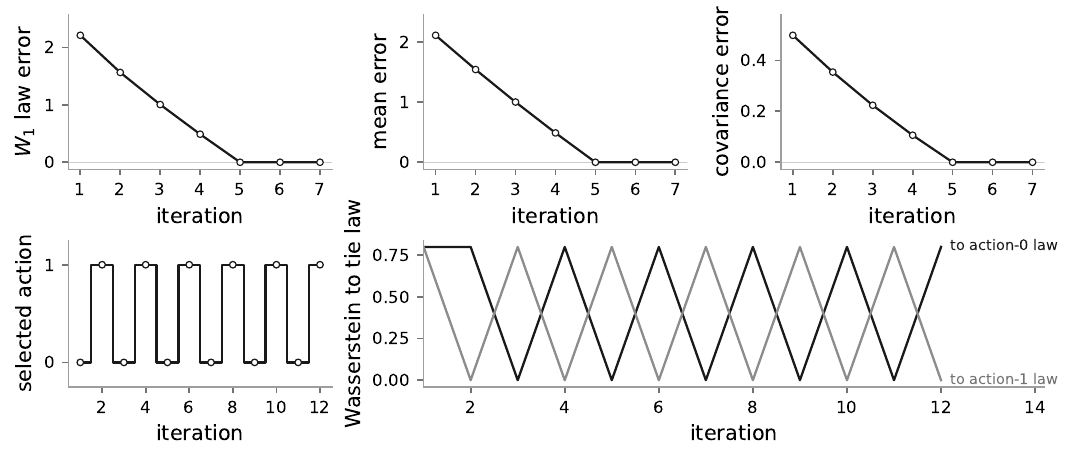}
\caption{Exact tabular distributional control. In the top row, the optimal action is unique at every nonterminal state, and the joint law, mean vector, and covariance blocks converge to their ground-truth values. The bottom panels use an alternating sequence of mean-greedy tie resolutions, which switches between two optimal return laws.}
\label{fig:distributional-convergence}
\end{figure}

\begin{figure}[t]
\centering
\includegraphics[width=0.88\linewidth]{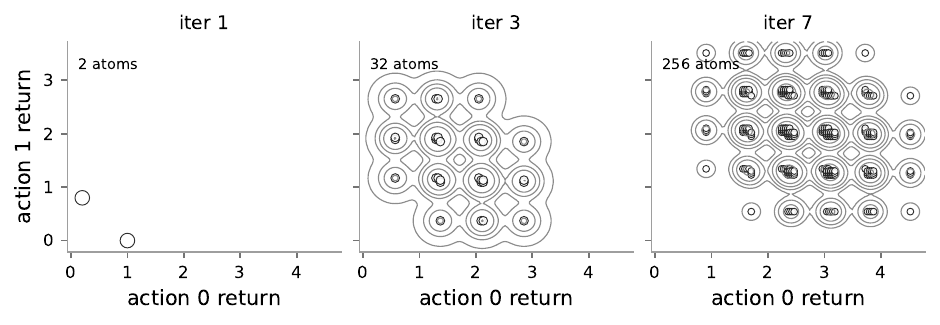}
\caption{Joint return law at the start state of the coupled reward chain. Dots are atoms, i.e., support points with positive probability. Contours smooth the same discrete mass for readability.}
\label{fig:distribution-support}
\end{figure}

\subsection{Joint moment control in a windy gridworld}

The next experiment is a $5\times 5$ windy gridworld with four actions and $\gamma=0.95$. Each state has two outcome tables. With probability $0.75$, every counterfactual action moves normally. With probability $0.25$, a shared gust shifts every successor one column left. Boundary moves are clipped, and entering the lower-right goal gives reward one and terminates. Enumerating the full joint return law requires tracking a growing discrete support over four action-return coordinates. The Bellman equations for $\mu$ and $M$ remain finite-dimensional, so we apply the greedy moment recursions directly. Several states have an exact tie between moving right and moving down. Ordinary value iteration and moment iteration use the same fixed lowest-index tie rule.

Moment iteration stops at residual $10^{-8}$. Means are checked against ordinary value iteration and covariance blocks against a separate fixed-policy JMDP moment evaluation under the resulting greedy policy. A second check uses 8000 jointly propagated Monte Carlo rollouts of horizon 96 at states $0$, $4$, $12$, and $23$. Figure~\ref{fig:gridworld-moments} shows agreement with both references. The goal is the only source of reward, and the shared leftward gust can delay but never hasten reaching it. A gust therefore lowers several counterfactual returns together, while its absence leaves those returns higher, producing the nonnegative correlations shown.

\begin{figure}[t]
    \centering
    \includegraphics[width=\linewidth]{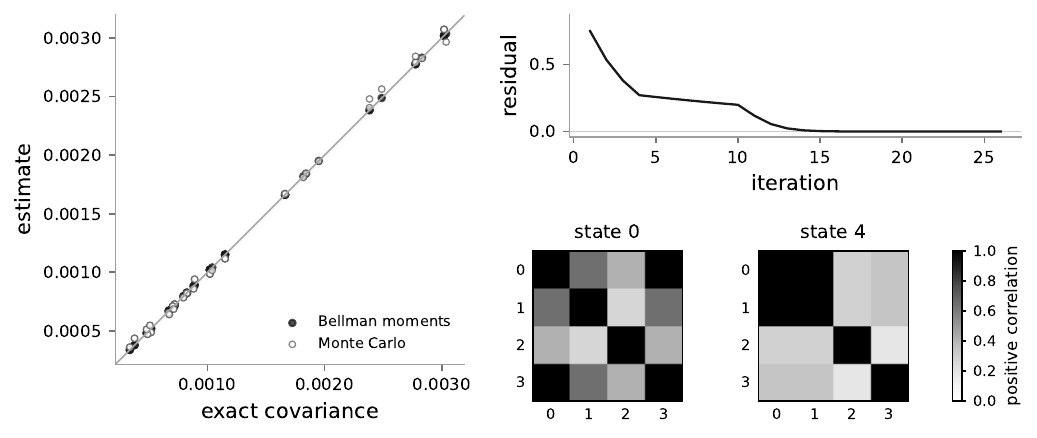}
    \caption{Moment control in a stochastic windy gridworld. The residual decays to zero, the learned covariance blocks match a separate fixed-policy JMDP moment evaluation, and finite-horizon Monte Carlo rollouts give an independent rollout-based comparison. The grayscale matrices show action correlations at representative states.}
    \label{fig:gridworld-moments}
\end{figure}

\subsection{The anti-correlated route choice environment}

In this experiment, a one-decision route-choice JMDP shows how cross-action covariance affects traffic allocation. The available routes are one safe route and two risky ridge routes. The episode terminates after the route outcome, so each immediate reward $R^a$ is also the complete return $Z_a$ for that action. The coupled reward vector $(R^{\mathrm{safe}},R^{\mathrm{left}},R^{\mathrm{right}})$ equals $(1,8,-4)$ with probability $0.40$, $(1,-4,8)$ with probability $0.40$, $(1,8,8)$ with probability $0.10$, and $(1,0,0)$ with probability $0.10$. The safe route is deterministic, and the ridge routes have equal means and marginal variances. A shared wind makes one ridge fail when the other succeeds in most tables, giving the ridge returns negative covariance. The decision maker chooses an allocation $w\in\Delta(\cA)$, where $w_a$ is the fraction of traffic assigned to route $a$. The resulting random return is $w^\top Z$. We choose the allocation in spirit of the mean-risk portfolio optimization problem of \citet{Markowitz} by maximizing
\begin{equation}\label{eq:route-markowitz}
\max_{w\in\Delta(\cA)}\quad w^\top\mu-\beta\sqrt{w^\top Cw},
\end{equation}
where $\mu_a=\bE[Z_a]$ and $C_{ab}=\Cov(Z_a,Z_b)$. The variance term is
\begin{equation}
w^\top Cw=\sum_a w_a^2\Var(Z_a)+2\sum_{a<b}w_aw_b\Cov(Z_a,Z_b).
\end{equation}
We use $\beta=0.5$ and solve \eqref{eq:route-markowitz} on a simplex grid of spacing $0.02$. The negative cross term makes an equal allocation between the ridge routes free of catastrophic loss in this example, although its return remains variable. A diagonal baseline solves the same objective after setting $C_{ab}=0$ for $a\ne b$. The traffic-allocation decision combines all route outcomes from the same wind realization in $w^\top Z$. By contrast, a randomized single-route policy draws $A\sim w$ independently of the table and has catastrophe probability $\sum_aw_ap_a$, where $p_a=\Pr(Z_a<0)$. The variance is determined by the marginal laws and contains no off-diagonal covariance. We also select a single route by lower-tail CVaR at level $0.1$ and by the superiority matrix $H_{ab}=\Pr(Z_a>Z_b)$, using $\arg\max_a\min_{b\ne a}H_{ab}$. Figure~\ref{fig:risk-route} separates these single-route rules from the covariance-sensitive allocation.

\begin{figure}[t]
    \centering
    \includegraphics[width=0.92\linewidth]{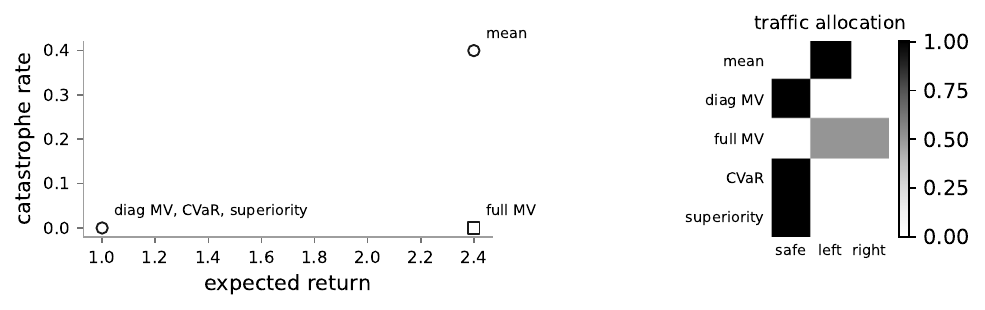}
    \caption{Route choice with coupled failures. Circles denote single-route rules and the square denotes the solution of \eqref{eq:route-markowitz}. ``diag MV'' sets off-diagonal covariances to zero, and ``full MV'' uses the full matrix. The right panel shows the resulting traffic allocation. ``Superiority'' uses $\arg\max_a\min_{b\ne a}\Pr(Z_a>Z_b)$.}
    \label{fig:risk-route}
\end{figure}

\subsection{Neural approximation in a coupled inverted pendulum task}

In this experiment, we test whether neural estimators trained from sampled outcome tables can recover joint moments and a particle approximation of the joint return law in a continuous-state JMDP. The task is a coupled inverted pendulum whose state is angle--velocity $(\vartheta,\omega)$ and whose action force is $a\in\{-1,0,1\}$. Initial states use $\vartheta\sim\mathrm{Unif}[-1.25,1.25]$ and $\omega\sim\mathcal N(0,0.45^2)$. For $\epsilon\sim\mathcal N(0,1)$, the dynamics and reward are
\begin{equation}
\begin{gathered}
\omega'=0.72\omega+0.30a+0.18\epsilon-0.10\vartheta,\qquad \vartheta'=\vartheta+\omega',\\
r=0.75-0.18\vartheta^2-0.06\omega^2-0.035a^2+0.75a\epsilon-0.10\epsilon^2.
\end{gathered}
\end{equation}
The first disturbance is shared by all three action branches. Thereafter, each branch follows a fixed stabilizer that chooses $-1$ when $\vartheta+0.65\omega>0.18$, $1$ when it is below $-0.18$, and $0$ otherwise. Future disturbances are independent across distinct continuous states. Returns have horizon 10 and $\gamma=0.92$. Finite-horizon return regression tests whether a neural model can identify the coupling under this fixed continuation policy.

The moment model predicts means and a symmetric second-moment matrix. The diagonal baseline retains the predicted marginal variances but sets every off-diagonal covariance to zero. We also fit a fixed-size joint particle model. For both architectures, a shuffled baseline independently permutes each action coordinate across tables. This preserves the conditional marginals and removes their coupling. The particle models use a sample estimate of the energy distance \citep{SZEKELY20131249}
\begin{equation}
\mathcal E(P,Q)=2\bE\left[\|X-Y\|_2\right]-\bE\left[\|X-X'\|_2\right]-\bE\left[\|Y-Y'\|_2\right],
\end{equation}
where $X,X'\sim P$ and $Y,Y'\sim Q$ are independent. It is nonnegative and vanishes exactly when $P=Q$, so smaller values indicate closer joint laws. For each of five seeds, models are trained for 20,000 minibatches of 128 states with eight return vectors per state. The moment networks have two 96-unit hidden layers and use mean MSE plus $0.2$ times symmetric second-moment MSE. The particle networks output 32 vectors and use $0.08$ times the energy loss. All networks use Adam \citep{kingma2017adammethodstochasticoptimization} with learning rate $3\times10^{-4}$. Evaluation uses 512 held-out states and 256 Monte Carlo vectors per state, with energy distance evaluated on the first 96 states. Figure~\ref{fig:neural-jmdp-control} shows that the joint moment model has lower covariance and correlation MAE than the diagonal and shuffled baselines. The diagonal baseline cannot recover off-diagonal entries because it sets them to zero. The particle model trained on coupled tables improves on its shuffled counterpart and attains lower covariance error than the moment model in this experiment.

\begin{figure}[t]
    \centering
    \IfFileExists{figures/neural_jmdp_control.pdf}{
        \includegraphics[width=\linewidth]{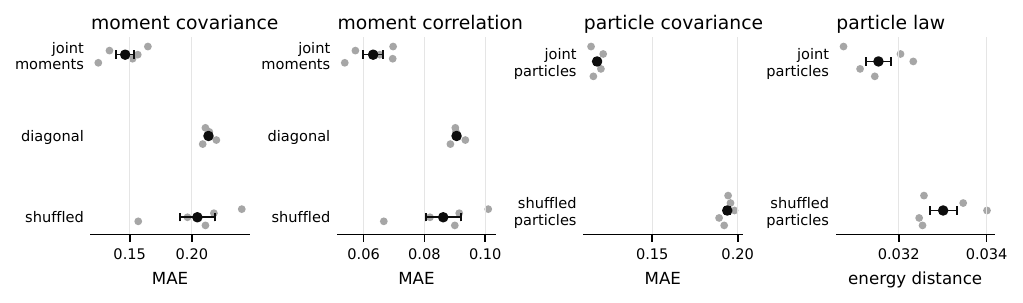}
    }{
        \fbox{\begin{minipage}[c][1.2in][c]{0.88\linewidth}
        \centering Neural JMDP figure pending cluster run.
        \end{minipage}}
    }
    \caption{Neural approximation in the coupled inverted-pendulum JMDP. All panels evaluate neural estimators. The first two show moment models and the last two show particle models. MAE denotes mean absolute error over off-diagonal entries. The covariance panels share a horizontal scale. Energy distance compares the learned and Monte Carlo joint laws, with lower values indicating better agreement. Dots are five seeds and bars show the standard error.}
    \label{fig:neural-jmdp-control}
\end{figure}

\section{Conclusion}
This work extends JMDPs from fixed-policy evaluation to control. The nonparametric distributional iterates converge when the mean-optimal policy is unique, and the first two moments converge under the weaker common-fixed-point condition for optimal ties. The sampled targets retain both successor state--action pairs in every cross moment. JMDP control preserves the mean-optimal policy of the induced MDP and also determines how its counterfactual action returns move together. The tabular and neural experiments support these recursions and show that their joint quantities can be recovered from coupled outcome tables.

\newpage

\bibliography{refs}

@inproceedings{
kaya2026joint,
title={Joint {MDP}s and Reinforcement Learning in Coupled-Dynamics Environments},
author={Ege Can Kaya and Mahsa Ghasemi and Abolfazl Hashemi},
booktitle={Forty-Second Annual Conference on Uncertainty in Artificial Intelligence},
year={2026},
url={https://openreview.net/forum?id=OgERnfBJyc}
}

@InProceedings{c51,
  title = 	 {A Distributional Perspective on Reinforcement Learning},
  author =       {Marc G. Bellemare and Will Dabney and R{\'e}mi Munos},
  booktitle = 	 {Proceedings of the 34th International Conference on Machine Learning},
  pages = 	 {449--458},
  year = 	 {2017},
  editor = 	 {Precup, Doina and Teh, Yee Whye},
  volume = 	 {70},
  series = 	 {Proceedings of Machine Learning Research},
  month = 	 {06--11 Aug},
  publisher =    {PMLR},
  url = 	 {https://proceedings.mlr.press/v70/bellemare17a.html}
}

@inproceedings{cvar-mdp,
author = {Chow, Yinlam and Ghavamzadeh, Mohammad},
title = {Algorithms for {CV}a{R} optimization in {MDP}s},
year = {2014},
publisher = {MIT Press},
address = {Cambridge, MA, USA},
booktitle = {Proceedings of the 28th International Conference on Neural Information Processing Systems - Volume 2},
pages = {3509–3517},
numpages = {9},
location = {Montreal, Canada},
series = {NIPS'14}
}

@book{DRL-textbook,
	author = {Bellemare, Marc G. and Dabney, Will and Rowland, Mark},
	doi = {10.7551/mitpress/14207.001.0001},
	eprint = {https://direct.mit.edu/book-pdf/2111075/book\_9780262374026.pdf},
	isbn = {9780262374026},
	month = {05},
	publisher = {The MIT Press},
	title = {Distributional Reinforcement Learning},
	url = {https://doi.org/10.7551/mitpress/14207.001.0001},
	year = {2023}}

@article{FQF,
  title={Fully parameterized quantile function for distributional reinforcement learning},
  author={Yang, Derek and Zhao, Li and Lin, Zichuan and Qin, Tao and Bian, Jiang and Liu, Tie-Yan},
  journal={Advances in neural information processing systems},
  volume={32},
  year={2019}
}

@InProceedings{iqn,
  title = 	 {Implicit Quantile Networks for Distributional Reinforcement Learning},
  author =       {Dabney, Will and Ostrovski, Georg and Silver, David and Munos, Remi},
  booktitle = 	 {Proceedings of the 35th International Conference on Machine Learning},
  pages = 	 {1096--1105},
  year = 	 {2018},
  editor = 	 {Dy, Jennifer and Krause, Andreas},
  volume = 	 {80},
  series = 	 {Proceedings of Machine Learning Research},
  month = 	 {10--15 Jul},
  publisher =    {PMLR},
  url = 	 {https://proceedings.mlr.press/v80/dabney18a.html}
}

@inproceedings{morimura,
author = {Morimura, Tetsuro and Sugiyama, Masashi and Kashima, Hisashi and Hachiya, Hirotaka and Tanaka, Toshiyuki},
title = {Nonparametric return distribution approximation for reinforcement learning},
year = {2010},
isbn = {9781605589077},
publisher = {Omnipress},
address = {Madison, WI, USA},
booktitle = {Proceedings of the 27th International Conference on International Conference on Machine Learning},
pages = {799–806},
numpages = {8},
location = {Haifa, Israel},
series = {ICML'10}
}

@inproceedings{multidimensional-reward,
author = {Zhang, Pushi and Chen, Xiaoyu and Zhao, Li and Xiong, Wei and Qin, Tao and Liu, Tie-Yan},
title = {Distributional reinforcement learning for multi-dimensional reward functions},
year = {2021},
isbn = {9781713845393},
publisher = {Curran Associates Inc.},
address = {Red Hook, NY, USA},
booktitle = {Proceedings of the 35th International Conference on Neural Information Processing Systems},
articleno = {117},
numpages = {11},
series = {NIPS '21}
}

@inproceedings{mv-mdp,
author = {Mannor, Shie and Tsitsiklis, John N.},
title = {Mean-variance optimization in {M}arkov decision processes},
year = {2011},
isbn = {9781450306195},
publisher = {Omnipress},
address = {Madison, WI, USA},
booktitle = {Proceedings of the 28th International Conference on International Conference on Machine Learning},
pages = {177–184},
numpages = {8},
location = {Bellevue, Washington, USA},
series = {ICML'11}
}

@inproceedings{qr-dqn,
author = {Dabney, Will and Rowland, Mark and Bellemare, Marc G. and Munos, R\'{e}mi},
title = {Distributional reinforcement learning with quantile regression},
year = {2018},
isbn = {978-1-57735-800-8},
publisher = {AAAI Press},
booktitle = {Proceedings of the Thirty-Second AAAI Conference on Artificial Intelligence and Thirtieth Innovative Applications of Artificial Intelligence Conference and Eighth AAAI Symposium on Educational Advances in Artificial Intelligence},
articleno = {353},
numpages = {10},
location = {New Orleans, Louisiana, USA},
series = {AAAI'18/IAAI'18/EAAI'18}
}

@article{sobel,
 ISSN = {00219002},
 URL = {http://www.jstor.org/stable/3213832},
 author = {Matthew J. Sobel},
 journal = {Journal of Applied Probability},
 number = {4},
 pages = {794--802},
 publisher = {Applied Probability Trust},
 title = {The Variance of Discounted {M}arkov Decision Processes},
 urldate = {2025-04-15},
 volume = {19},
 year = {1982}
}

@InProceedings{tamar2,
  title = 	 {Temporal Difference Methods for the Variance of the Reward To Go},
  author = 	 {Tamar, Aviv and Di Castro, Dotan and Mannor, Shie},
  booktitle = 	 {Proceedings of the 30th International Conference on Machine Learning},
  pages = 	 {495--503},
  year = 	 {2013},
  editor = 	 {Dasgupta, Sanjoy and McAllester, David},
  volume = 	 {28},
  series = 	 {Proceedings of Machine Learning Research},
  address = 	 {Atlanta, Georgia, USA},
  month = 	 {17--19 Jun},
  publisher =    {PMLR},
  url = 	 {https://proceedings.mlr.press/v28/tamar13.html}
}

@inproceedings{morimura-parametric, author = {Morimura, Tetsuro and Sugiyama, Masashi and Kashima, Hisashi and Hachiya, Hirotaka and Tanaka, Toshiyuki}, title = {Parametric return density estimation for reinforcement learning}, year = {2010}, isbn = {9780974903965}, publisher = {AUAI Press}, address = {Arlington, Virginia, USA}, booktitle = {Proceedings of the Twenty-Sixth Conference on Uncertainty in Artificial Intelligence}, pages = {368–375}, numpages = {8}, location = {Catalina Island, CA}, series = {UAI'10} }

@InProceedings{bellman-gan,
  title = 	 {Distributional Multivariate Policy Evaluation and Exploration with the {B}ellman {GAN}},
  author =       {Freirich, Dror and Shimkin, Tzahi and Meir, Ron and Tamar, Aviv},
  booktitle = 	 {Proceedings of the 36th International Conference on Machine Learning},
  pages = 	 {1983--1992},
  year = 	 {2019},
  editor = 	 {Chaudhuri, Kamalika and Salakhutdinov, Ruslan},
  volume = 	 {97},
  series = 	 {Proceedings of Machine Learning Research},
  month = 	 {09--15 Jun},
  publisher =    {PMLR},
  url = 	 {https://proceedings.mlr.press/v97/freirich19a.html}
}

@article{Markowitz,
 ISSN = {00221082, 15406261},
 URL = {http://www.jstor.org/stable/2975974},
 author = {Harry Markowitz},
 journal = {The Journal of Finance},
 number = {1},
 pages = {77--91},
 publisher = {[American Finance Association, Wiley]},
 title = {Portfolio Selection},
 urldate = {2025-09-20},
 volume = {7},
 year = {1952}
}

@article{wiltzer2024action,
  title={Action gaps and advantages in continuous-time distributional reinforcement learning},
  author={Wiltzer, Harley and Bellemare, Marc and Meger, David and Shafto, Patrick and Jhaveri, Yash},
  journal={Advances in Neural Information Processing Systems},
  volume={37},
  pages={47815--47848},
  year={2024}
}

@article{wiltzer2024foundations,
  title={Foundations of multivariate distributional reinforcement learning},
  author={Wiltzer, Harley and Farebrother, Jesse and Gretton, Arthur and Rowland, Mark},
  journal={Advances in Neural Information Processing Systems},
  volume={37},
  pages={101297--101336},
  year={2024}
}

@article{artzner1999coherent,
  title={Coherent measures of risk},
  author={Artzner, Philippe and Delbaen, Freddy and Eber, Jean-Marc and Heath, David},
  journal={Mathematical finance},
  volume={9},
  number={3},
  pages={203--228},
  year={1999},
  publisher={Wiley Online Library}
}

@article{rockafellar2000optimization,
  title={Optimization of conditional value-at-risk},
  author={Rockafellar, R Tyrrell and Uryasev, Stanislav and others},
  journal={Journal of risk},
  volume={2},
  pages={21--42},
  year={2000}
}

@InProceedings{oberst2019counterfactual,
  title = 	 {Counterfactual Off-Policy Evaluation with {G}umbel-Max Structural Causal Models},
  author =       {Oberst, Michael and Sontag, David},
  booktitle = 	 {Proceedings of the 36th International Conference on Machine Learning},
  pages = 	 {4881--4890},
  year = 	 {2019},
  editor = 	 {Chaudhuri, Kamalika and Salakhutdinov, Ruslan},
  volume = 	 {97},
  series = 	 {Proceedings of Machine Learning Research},
  month = 	 {09--15 Jun},
  publisher =    {PMLR},
  url = 	 {https://proceedings.mlr.press/v97/oberst19a.html}
}

@inproceedings{buesing2019woulda,
  author       = {Lars Buesing and
                  Theophane Weber and
                  Yori Zwols and
                  Nicolas Heess and
                  S{\'{e}}bastien Racani{\`{e}}re and
                  Arthur Guez and
                  Jean{-}Baptiste Lespiau},
  title        = {Woulda, Coulda, Shoulda: Counterfactually-Guided Policy Search},
  booktitle    = {7th International Conference on Learning Representations, {ICLR} 2019,
                  New Orleans, LA, USA, May 6-9, 2019},
  publisher    = {OpenReview.net},
  year         = {2019},
  url          = {https://openreview.net/forum?id=BJG0voC9YQ},
  bibsource    = {dblp computer science bibliography, https://dblp.org}
}

@article{lorberbom2021learning,
  title={Learning generalized {G}umbel-max causal mechanisms},
  author={Lorberbom, Guy and Johnson, Daniel D and Maddison, Chris J and Tarlow, Daniel and Hazan, Tamir},
  journal={Advances in Neural Information Processing Systems},
  volume={34},
  pages={26792--26803},
  year={2021}
}

@misc{haugh2023counterfactualanalysisdynamiclatent,
      title={Counterfactual Analysis in Dynamic Latent State Models}, 
      author={Martin Haugh and Raghav Singal},
      year={2023},
      eprint={2205.13832},
      archivePrefix={arXiv},
      primaryClass={cs.LG},
      url={https://arxiv.org/abs/2205.13832}, 
}

@article{lally2025robust,
  title={Robust counterfactual inference in {M}arkov decision processes},
  author={Lally, Jessica and Kazemi, Milad and Paoletti, Nicola},
  journal={arXiv preprint arXiv:2502.13731},
  year={2025}
}

@inproceedings{
raghavan2025counterfactual,
title={Counterfactual Realizability},
author={Arvind Raghavan and Elias Bareinboim},
booktitle={The Thirteenth International Conference on Learning Representations},
year={2025},
url={https://openreview.net/forum?id=uuriavczkL}
}

@article{SZEKELY20131249,
	author = {G{\'a}bor J. Sz{\'e}kely and Maria L. Rizzo},
	journal = {Journal of Statistical Planning and Inference},
	number = {8},
	pages = {1249-1272},
	title = {Energy statistics: A class of statistics based on distances},
	volume = {143},
	year = {2013}}

@misc{kingma2017adammethodstochasticoptimization,
      title={Adam: A Method for Stochastic Optimization}, 
      author={Diederik P. Kingma and Jimmy Ba},
      year={2017},
      eprint={1412.6980},
      archivePrefix={arXiv},
      primaryClass={cs.LG},
      url={https://arxiv.org/abs/1412.6980}, 
}
\bibliographystyle{tmlr}

\end{document}